%% file: main.tex
\documentclass[letterpaper, 10 pt, journal,twoside,romanappendices]{IEEEtran}
\IEEEoverridecommandlockouts 

\input{preambles}

\title{
Direct Informed Sampling on Riemannian\\ Manifolds via Loewner Order Lower Bounds
}

\author{Phone Thiha Kyaw and Jonathan Kelly
\thanks{All authors are with the Space and Terrestrial Autonomous Robotic Systems (STARS) Laboratory, University of Toronto Institute for Aerospace Studies (UTIAS), Toronto, Ontario M3H 5T6, Canada (e-mail: \{phone.thiha, jonathan.kelly\}@robotics.utias.utoronto.ca).}}

\begin{document}
\maketitle
\thispagestyle{empty}
\pagestyle{empty}

\begin{abstract}
\looseness=-1
Informed sampling techniques accelerate sampling-based motion planners by focusing the search on promising regions of the state space, yet most existing methods rely on Euclidean heuristics that become inadmissible under configuration-dependent Riemannian metrics.
While scalar eigenvalue bounds restore admissibility by uniformly scaling the Euclidean distance, they discard the directional structure of the metric, producing overly conservative informed sets.
We propose a matrix-valued admissible heuristic that exploits the Loewner order on symmetric positive definite matrices to compute the tightest constant lower bound on the metric tensor while preserving its full directional structure.
The Cholesky factorization of this bound defines a linear map to an isotropic Euclidean space in which the Riemannian informed set reduces to a standard prolate hyperspheroid, enabling direct, rejection-free sampling using existing algorithms.
Experiments on manipulation tasks with a 6-DoF UR5, 7-DoF Franka, and 14-DoF PR2 under three distinct Riemannian metrics show that our heuristic produces consistently tighter informed sets than both the Euclidean and scalar eigenvalue bounds, accelerating convergence across multiple state-of-the-art asymptotically optimal planners.
\end{abstract}

\begin{IEEEkeywords}
Motion and Path Planning, Informed Sampling, Riemannian Geometry, Loewner Order
\end{IEEEkeywords}

\input{sections/introduction}
\input{sections/related-work}
\input{sections/preliminaries}
\input{sections/heuristics}
\input{sections/informed-sets}
\input{sections/experiments}
\input{sections/conclusion}
\input{sections/appendix}

\vspace{-3.5mm}
\bibliographystyle{IEEEtran}
\bibliography{IEEEabrv,references-short}

\end{document}

%% file: preambles.tex
\usepackage{amsmath,amssymb,amsfonts,amstext}
\usepackage{bm}
\usepackage{graphicx}
\usepackage[ruled,linesnumbered,noend]{algorithm2e}
\usepackage{array}
\usepackage{multirow}
\usepackage{url}
\usepackage{hyperref}

\usepackage{lipsum}

\usepackage{caption}
\usepackage{subcaption}
\usepackage[table,xcdraw]{xcolor}

\usepackage{siunitx}
\usepackage{microtype}

\usepackage{amsthm}
\newtheorem{theorem}{Theorem}

\newtheorem{corollary}{Corollary}

\newtheorem{definition}{Definition}
\newtheorem{remark}{Remark}

\makeatletter
\patchcmd{\@algocf@start}{-1.5em}{0pt}{}{}
\makeatother

\SetArgSty{textup}

\usepackage{xparse}

\let\originalleft\left
\let\originalright\right
\renewcommand{\left}{\mathopen{}\mathclose\bgroup\originalleft}
\renewcommand{\right}{\aftergroup\egroup\originalright}

\NewDocumentCommand\Set{m}{ \left\{#1\right\} }
\NewDocumentCommand\Real{}{ \mathbb{R} }
\NewDocumentCommand\Sym{}{ \mathbb{S} }

\NewDocumentCommand\PDMatrices{m}{\Sym^{#1}_{++}}
\NewDocumentCommand\T{}{\mathsf{T}}

\NewDocumentCommand\Matrix{m}{ \bm{\mathbf{#1}} }
\NewDocumentCommand\Transpose{m}{ \left.{#1}\right.^\T }
\NewDocumentCommand\Inv{m}{{#1}^{-1}}
\NewDocumentCommand\Determinant{m}{ \mathrm{det}\left(#1\right) }
\NewDocumentCommand\Norm{m}{ \left\Vert#1\right\Vert }
\NewDocumentCommand\Identity{}{ \Matrix{I} }
\NewDocumentCommand\ArgMin{m}{ \operatorname*{argmin}_{#1} }

\newcommand{\lmin}{\lambda_{\mathrm{min}}}        
\newcommand{\Glower}{\Matrix{G}_{\mathrm{lower}}} 

\newcommand{\qstart}{q_\mathrm{start}}
\newcommand{\qgoal}{q_\mathrm{goal}}
\newcommand{\xstart}{x_\mathrm{start}}
\newcommand{\xgoal}{x_\mathrm{goal}}

%% file: sections/introduction.tex
\section{Introduction}
\label{sec:introduction}

\looseness=-1
Asymptotically optimal sampling-based planners, such as Rapidly-exploring Random Trees (RRT*) and Batch
Informed Trees (BIT*), incrementally explore the configuration space using random sampling and asymptotically converge to the global optimum as the number of samples grows~\cite{lavalle2001randomized, karaman2011sampling, gammell2020batch}.
A key insight of direct informed sampling approaches is that convergence can be dramatically accelerated by focusing sampling on the \emph{informed set}, the subset of the configuration space that could potentially include promising samples capable of improving the current best solution~\cite{gammell2018informed}.
Under Euclidean metric costs, this set is a prolate hyperspheroid centered on the start--goal axis, and uniform samples can be drawn directly via an affine transformation of an $n$-dimensional unit ball.

\begin{figure}[t]
\centering
\includegraphics[width=0.99\linewidth]{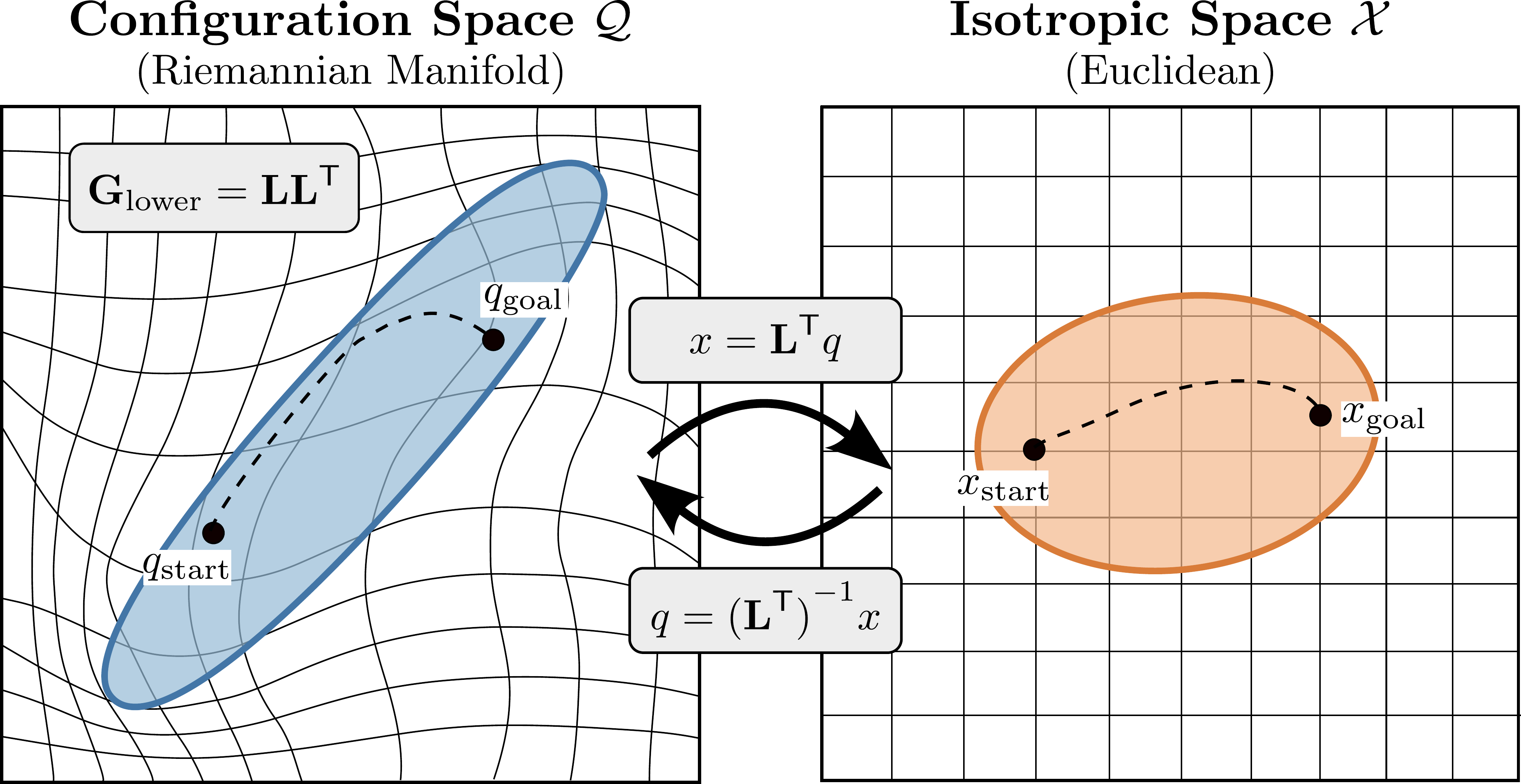}
\caption{
\looseness=-1
Illustration of the proposed direct informed sampling approach for a Riemannian configuration space.
The Cholesky factorization of a Loewner lower bound $\Glower = \Matrix{L}\Matrix{L}^\T$ of the configuration-dependent metric tensor defines a linear isometry $x = \Matrix{L}^\T q$ that maps the anisotropic Riemannian informed set (blue) in the original configuration space to a prolate hyperspheroid (orange) in an isotropic Euclidean space.
Direct sampling from the prolate hyperspheroid in the isotropic space produces rejection-free samples on the Riemannian informed set without requiring knowledge of the full metric field.
}
\vspace{-4mm}
\label{fig:cover}
\end{figure}

\looseness=-1
Many robotics applications, however, require planning with respect to \emph{configuration-dependent} cost structures that go beyond Euclidean distance.
For instance, kinetic energy metrics weight the cost of motion by the manipulator's mass matrix, producing shorter paths through low-inertia configurations~\cite{bullo2019geometric,jaquier2022riemannian,li2024riemannian,kyaw2026geometry}.
Other position-dependent metrics include manipulability~\cite{jaquier2021geometry}, joint stiffness~\cite{saveriano2023learning}, and distance to kinematic singularities~\cite{maric2021riemannian}.
In all such cases, the cost of traversing a path depends on \emph{where} the robot is in configuration space, captured mathematically by a Riemannian metric tensor that varies smoothly with the configuration.

Existing informed sampling methods cannot be applied directly to these Riemannian cost structures because they rely on an admissible heuristic---a lower bound on the true cost-to-go---that is unavailable under non-Euclidean metrics.
Without one, planners must either sample uniformly over the entire configuration space, forgoing the convergence benefits of informed sampling, or default to the Euclidean distance as the heuristic.
The latter option is not merely conservative; it is inadmissible under many non-Euclidean metrics.
Under the kinetic energy metric, for example, the Euclidean heuristic overestimates the true geodesic distance for 99\% of configuration pairs on high-DoF manipulators (as we show in Section~\ref{sec:exp-heuristic-quality}),
producing informed sets that fail to contain the true promising region.
Even when a valid scalar heuristic can be constructed, for example, by scaling the Euclidean distance by the minimum eigenvalue of the metric tensor~\cite{peyre2009geodesic}, the resulting informed set is still a prolate hyperspheroid and thus ignores the directional structure of the metric, assigning equal sampling density to all directions regardless of their associated eigenvalues.

\looseness=-1
In this paper, we address these limitations by introducing a direct informed sampling method based on the Loewner order on symmetric positive definite (SPD) matrices.
Our key idea is to compute a constant SPD matrix that lower-bounds the configuration-dependent metric tensor everywhere in the space, preserving directional information that scalar bounds discard.
The Cholesky factorization of this lower bound matrix defines an \emph{isotropic} Euclidean space in which the Riemannian informed set reduces to a standard ellipsoid, enabling rejection-free direct sampling via well-known Euclidean techniques~(Figure~\ref{fig:cover}).
To the best of our knowledge, this is the first method for direct informed sampling under configuration-dependent Riemannian metrics.

\looseness=-1
Our main contributions are as follows.
\begin{itemize}
\item We introduce a matrix-valued admissible heuristic via the Loewner order on SPD matrices that is provably at least as tight as the scalar (minimum eigenvalue) bound.
\item We provide an efficient incremental algorithm for computing a maximal constant lower bound on the metric tensor from a sequence of pointwise evaluations.
\item We show that the Loewner lower bound induces an isotropic space in which the Riemannian informed set is a standard prolate hyperspheroid, enabling direct informed sampling using existing Euclidean techniques.
\end{itemize}

%% file: sections/related-work.tex
\vspace{-2mm}
\section{Related Work}
\label{sec:related-work}

Informed sampling has driven much of the recent progress in optimal planning, but existing heuristics for non-Euclidean cost structures are either restricted to kinodynamic settings or do not scale to high-dimensional configuration spaces (see Sections~\ref{sec:informed-sampling} and ~\ref{sec:admissible-heuristics-review}).
We construct a matrix-valued admissible heuristic from the Loewner order; Section~\ref{sec:loewner-robotics} reviews its prior uses in robotics.

\subsection{Informed Sampling for Optimal Planning}
\label{sec:informed-sampling}

Direct informed sampling accelerates convergence of asymptotically optimal planners by restricting sampling to the \emph{informed set}, the region of configuration space that could contain paths shorter than the current best solution~\cite{gammell2018informed}.
Under Euclidean path length, this set is a prolate hyperspheroid, from which uniform samples can be drawn directly without rejection.
Batch Informed Trees (BIT*) generalizes informed sampling to batches of random samples, processing them in order of admissible lower-bound estimates of potential solution cost~\cite{gammell2020batch}.
Adaptively Informed Trees (AIT*) and Effort Informed Trees (EIT*) further relax the need for closed-form admissible heuristics by computing problem-specific cost-to-go estimates through a reverse search from the goal~\cite{strub2022adaptively}.
Greedy-RRT* (G-RRT*) instead accelerates convergence by biasing samples toward the current best path through a greedy variant of the informed set~\cite{kyaw2024greedy}.
Despite these advances, the direct sampling step in all of these methods fundamentally relies on the Euclidean geometry of the informed set.

When the cost function is non-Euclidean, the informed set loses its ellipsoidal structure and direct sampling is no longer straightforward~\cite{sehn2024off}.
Prior efforts to address this have focused solely on kinodynamic planning.
Yi et al.\ recast informed set sampling as a sub-level set problem, applying MCMC to draw samples from non-ellipsoidal regions~\cite{yi2018generalizing}.
Kunz et al.\ propose \emph{Hierarchical Rejection Sampling}, which constructs a hierarchy of axis-aligned bounding boxes to improve acceptance rates over naive rejection sampling~\cite{kunz2016hierarchical}.
While both methods handle general informed set geometries, they rely on iterative or rejection-based procedures and do not recover the direct sampling achievable under the Euclidean metric; extending direct informed sampling to non-Euclidean cost structures remains an open problem~\cite{gammell2021asymptotically}.
Our work addresses this in the geometric planning setting, where the non-Euclidean cost arises from a configuration-dependent Riemannian metric tensor rather than from differential constraints.

\vspace{-2mm}
\subsection{Admissible Heuristics on Non-Euclidean Spaces}
\label{sec:admissible-heuristics-review}

Admissible heuristics for non-Euclidean costs have been studied primarily in kinodynamic settings.
One approach uses sum-of-squares programming to derive admissibility certificates from system dynamics~\cite{paden2017verification}.
More recent work extends heuristic-guided search to kinodynamic planning without requiring a closed-form steering function~\cite{przybylski2024asymptotically}.
Other approaches construct admissible bounds from precomputed motion primitive databases~\cite{sakcak2019admissible} or through dimensionality relaxation~\cite{liu2018search}.
All of these methods, however, are tailored to kinodynamic settings and do not address planning under general configuration-dependent Riemannian metrics.

\looseness=-1
A complementary line of work constructs heuristics by approximating geodesic distances rather than computing them exactly~\cite{surazhsky2005fast}.
On surfaces and graphs, landmark-based methods precompute distances to selected reference points and exploit the triangle inequality to construct admissible lower bounds~\cite{peyre2006landmark,paden2017landmark}.
Scaling these methods to general high-dimensional Riemannian manifolds, however, requires solving the Eikonal equation~\cite{peyre2009geodesic,mirebeau2019riemannian}, which becomes computationally intractable as dimension grows.
Several recent works address planning in non-Euclidean configuration spaces without fully resolving the heuristic problem.
One line extends Graphs of Convex Sets to Riemannian manifolds via convex relaxation~\cite{cohn2025non}, and another establishes asymptotic optimality of RRT* and PRM* under general non-Euclidean metrics~\cite{lukyanenko2023probabilistic}, but neither provides explicit heuristics for informed sampling.
Recently, Kyaw and Kelly proposed a geometry-aware planning framework using a midpoint-based geodesic approximation that is accurate but not provably admissible, explicitly identifying heuristic design as important future work~\cite{kyaw2026geometry}.

\vspace{-2mm}
\subsection{Loewner Order in Robotics}
\label{sec:loewner-robotics}

\looseness=-1
The Loewner order on symmetric matrices is foundational in robust control, where linear matrix inequalities encode stability and robustness constraints~\cite{boyd1994linear}.
The closest analogues appear in belief-space planning, where scalar bounds on the maximum eigenvalue of the covariance matrix make sampling-based search tractable~\cite{bopardikar2016robust,shan2017belief}, but these bounds collapse a configuration-dependent matrix to a scalar, discarding all directional information.
Contraction-based approaches retain more structure; Singh et al.\ optimize a matrix-valued contraction metric via sum-of-squares programming and use a constant Loewner lower bound to define ellipsoidal robustness tubes around tracked trajectories~\cite{singh2017robust}.
In trajectory optimization, STOMP and CHOMP precompute a fixed precision matrix from finite-difference operators to define a Riemannian metric on the trajectory space~\cite{kalakrishnan2011stomp,zucker2013chomp}; however, this matrix encodes temporal smoothness rather than configuration-space geometry.
More broadly, semidefinite programming, which optimizes over the cone ordered precisely by the Loewner order, has been applied to related robotic problems such as collision-free region construction~\cite{deits2015computing} and state estimation~\cite{holmes2024semidefinite}.
In contrast, we exploit the Loewner order to construct a \emph{matrix-valued} admissible heuristic that preserves the full directional structure of the metric tensor while enabling direct ellipsoidal informed sampling.

%% file: sections/preliminaries.tex
\section{Preliminaries}
\label{sec:preliminaries}

This section reviews the key concepts and establishes the notation used throughout the paper.

\vspace{-2mm}
\subsection{Riemannian Configuration Spaces}
\label{sec:riemannian-configuration-spaces}

\looseness=-1
Let $\mathcal{Q} \subseteq \Real^n$ denote the configuration space manifold, with each $q \in \mathcal{Q}$ representing a configuration of the robot.
A Riemannian metric on $\mathcal{Q}$ is a smoothly varying positive definite matrix $\Matrix{G}(q) \in \PDMatrices{n}$ that defines an inner product at each configuration~\cite{lee2018introduction}.
The metric induces a notion of path length: for a piecewise smooth curve $\pi : [0,1] \to \mathcal{Q}$, the Riemannian arc length is
\begin{equation}
\label{eqn:arc-length}
L(\pi) = \int_0^1 \sqrt{\Transpose{\dot{\pi}(t)}\, \Matrix{G}(\pi(t))\, \dot{\pi}(t)} \; dt.
\end{equation}
The Riemannian distance between two configurations is defined as the infimum of arc lengths over all connecting curves,
\begin{equation}
\label{eqn:distance}
d(q_x, q_y) = \inf_{\pi}\, L(\pi), \quad \pi(0) = q_x,\; \pi(1) = q_y.
\end{equation}
When $\Matrix{G}(q) = \Identity$ for all $q$, the Riemannian distance reduces to the standard Euclidean distance $\Norm{q_x - q_y}_2$.

\subsection{Informed Sampling for Optimal Planning}
\label{sec:informed-sampling-for-optimal-planning}

\looseness=-1
Given a start configuration $\qstart$ and a goal configuration $\qgoal$, the optimal motion planning problem seeks a collision-free path $\pi^*$ that minimizes the Riemannian arc length in~\eqref{eqn:arc-length} from $\qstart$ to $\qgoal$.
Asymptotically optimal sampling-based planners, such as RRT*~\cite{karaman2011sampling}, incrementally build a search tree over $\mathcal{Q}$ and converge to $\pi^*$ as the number of samples grows.
This convergence can often be accelerated by using a distance estimate $\hat{d}$ that lower-bounds the true distance (\emph{admissible heuristic}), i.e., $\hat{d}(q_x, q_y) \leq d(q_x, q_y)$ for all $q_x, q_y \in \mathcal{Q}$, enabling the planner to prune configurations that provably cannot improve the current best solution.
Many informed planners rely on such heuristics to focus sampling in promising regions of the configuration space~\cite{gammell2020batch, strub2022adaptively, kyaw2024greedy}.

\looseness=-1
Specifically, let $c_{\mathrm{best}}$ denote the cost of the current best solution.
The \emph{informed set} is the subset of configurations that could lie on a shorter path~\cite{gammell2018informed},
\begin{equation}
\label{eqn:informed-set-prelim}
X_{\hat{f}} = \Set{q \in \mathcal{Q} : \hat{d}(\qstart, q) + \hat{d}(q, \qgoal) < c_{\mathrm{best}}}.
\end{equation}
Under the Euclidean metric ($\Matrix{G} = \Identity$), the natural heuristic is $\hat{d}(q_x, q_y) = \Norm{q_x - q_y}_2$, and $X_{\hat{f}}$ reduces to a prolate hyperspheroid with foci $\qstart$ and $\qgoal$, from which uniform samples can be drawn directly via an affine transformation of the unit ball.
However, when the cost is defined by a configuration-dependent Riemannian metric, the Euclidean distance is not necessarily admissible, and the geometry of $X_{\hat{f}}$ under a valid heuristic is no longer a standard hyperspheroid.

\subsection{Loewner Order on Symmetric Matrices}

The \emph{Loewner order} is the partial order on symmetric matrices induced by the
cone of positive semidefinite (PSD) matrices~\cite{bhatia2007positive}.

\begin{definition}[Loewner order]
\label{def:loewner-order}
For symmetric matrices $\Matrix{A}, \Matrix{B} \in \Sym^n$, we write $\Matrix{A} \preceq \Matrix{B}$ if $\Matrix{B} - \Matrix{A}$ is positive semidefinite, i.e., $v^\T \Matrix{A}\, v \leq v^\T \Matrix{B}\, v$ for all $v \in \Real^n$.
\end{definition}

The Loewner order is a partial order; that is, not all symmetric positive definite (SPD) matrices are comparable.
In fact, a unique \emph{greatest lower bound} (meet) generally does not exist for symmetric matrices unless the matrices are comparable in the Loewner order~\cite{kadison1951order}.
However, any finite collection $\smash{\Set{\Matrix{A}_1, \ldots, \Matrix{A}_k} \subset \PDMatrices{n}}$ admits a \emph{maximal lower bound} in the Loewner order, denoted $\smash{\bigwedge_{i=1}^k \Matrix{A}_i}$, which is an SPD matrix dominated by every $\Matrix{A}_i$ such that no strictly greater lower bound exists\footnote{The order is defined on all symmetric matrices; here we restrict to Riemannian metric evaluations, which are symmetric positive definite.}.
This meet is not available in closed form but can be computed iteratively via pairwise meets, as we describe in
Section~\ref{sec:heuristics}\footnote{With a slight abuse of terminology, we use the word ``meet'' to refer to this specific constructed maximal lower bound.}.

%% file: sections/heuristics.tex
\vspace{-2mm}
\section{Admissible Heuristics via Loewner Bounds}
\label{sec:heuristics}

We first introduce a lower bound that exploits the Loewner order on SPD matrices to produce tighter estimates than scalar eigenvalue bounds.
We then describe an iterative algorithm to compute this bound from a sequence of metric evaluations.

\vspace{-2mm}
\subsection{Matrix Lower Bound Heuristic}
\label{sec:matrix-lower-bound-heuristic}

We begin by formalizing the notion of a constant matrix that lower-bounds the configuration-dependent metric tensor everywhere on the manifold.

\looseness=-1
\begin{definition}[Loewner lower bound]
\label{def:loewner-lower-bound}
A constant symmetric positive definite matrix $\Glower \in \PDMatrices{n}$ is a \emph{Loewner lower bound} for the metric tensor $\Matrix{G}$ if it satisfies the Loewner order (Definition~\ref{def:loewner-order}) everywhere on $\mathcal{Q}$, i.e.,
$\Glower \preceq \Matrix{G}(q), \forall\, q \in \mathcal{Q}$.
\end{definition}

\looseness=-1
A natural approach to satisfy this definition is to scale the Euclidean distance by the global minimum eigenvalue of $\Matrix{G}(q)$, which yields the simplest lower bound estimate for Riemannian geodesic distance~\cite{peyre2009geodesic,singh2017robust}.

\begin{definition}[Scalar distance lower bound]
\label{def:scalar-distance-lower-bound}
Let $\lmin = \inf_{q \in \mathcal{Q}} \lmin(\Matrix{G}(q))$ denote the global minimum eigenvalue of the metric tensor, and define the isotropic lower-bound matrix $\Matrix{G}_{\lambda} = \lmin\Identity$.
The scalar distance lower bound is
\begin{equation}
\label{eqn:scalar-distance-lower-bound}
\hat{d}_{\lambda}(q_x, q_y) = \sqrt{\lmin}\,\Norm{q_x - q_y}_2 = \Norm{q_x - q_y}_{\Matrix{G}_{\lambda}} .
\end{equation}
\end{definition}

\looseness=-1
While the scalar bound ensures admissibility by scaling distances uniformly, it can be overly conservative.
We propose a more general distance estimate that replaces the configuration-dependent Riemannian metric with a tighter constant matrix.

\looseness=-1
\begin{definition}[Matrix distance lower bound]
\label{def:matrix-distance-lower-bound}
Let $\Glower \in \PDMatrices{n}$ be a Loewner lower bound for $\Matrix{G}$ (Definition~\ref{def:loewner-lower-bound}), and let $\Glower = \Matrix{L}\Matrix{L}^\T$ denote its Cholesky factorization.
The matrix distance lower bound is the function $\hat{d} \colon \mathcal{Q} \times \mathcal{Q} \to \Real_{\geq 0}$,
\begin{equation}
\label{eqn:matrix-distance-lower-bound}
\hat{d}(q_x, q_y) = \Norm{q_x - q_y}_{\Glower} = \Norm{\Matrix{L}^\T (q_x - q_y)}_2.
\end{equation}
\end{definition}
Note that the scalar bound in \eqref{eqn:scalar-distance-lower-bound} is a special case of the matrix bound, corresponding to the (isotropic) Loewner lower bound $\Glower = \Matrix{G}_{\lambda} = \lmin \Identity$.
Because $\Matrix{G}_{\lambda}$ is the largest isotropic matrix dominated by $\Glower$, the matrix bound is always at least as tight as the scalar bound, and strictly tighter in directions where $\Glower$ has eigenvalues greater than $\lmin$.

\looseness=-1
\begin{theorem}[Tightness relative to scalar bound]
The matrix distance lower bound $\hat{d}$ dominates the scalar bound $\hat{d}_{\lambda}$
\begin{equation*}
\hat{d}_{\lambda}(q_x, q_y) \leq \hat{d}(q_x, q_y), \quad \forall\, q_x, q_y \in \mathcal{Q},
\end{equation*}
with equality if and only if $q_x - q_y$ is an eigenvector of $\Glower$ corresponding to eigenvalue $\lmin$.
\end{theorem}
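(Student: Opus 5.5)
The plan is to reduce the inequality to a quadratic-form comparison. Set $v = q_x - q_y$. By Definitions~\ref{def:scalar-distance-lower-bound} and~\ref{def:matrix-distance-lower-bound},
\[
\hat{d}_{\lambda}(q_x,q_y)^2 = \lmin\, v^\T v,
\qquad
\hat{d}(q_x,q_y)^2 = v^\T \Glower\, v .
\]
Both distances are nonnegative and the square root is strictly increasing on $\Real_{\geq 0}$. So it suffices to show $v^\T(\Glower - \lmin\Identity)v \ge 0$ for every $v$, which is exactly the Loewner statement $\Matrix{G}_{\lambda} \preceq \Glower$ in the sense of Definition~\ref{def:loewner-order}.

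The main step, and the one I expect to be the real obstacle, is establishing $\Matrix{G}_{\lambda} \preceq \Glower$. This is not automatic for an arbitrary Loewner lower bound in the sense of Definition~\ref{def:loewner-lower-bound}. For instance, $\epsilon\Identity$ with small $\epsilon$ is a valid Loewner lower bound, yet it is dominated by $\Matrix{G}_\lambda$ rather than dominating it. Moreover, maximality of $\Glower$ alone does not give comparability with $\Matrix{G}_\lambda$, since the Loewner order is only partial.

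I would therefore make explicit the hypothesis implicit in the sentence preceding the theorem: $\Glower$ is a Loewner lower bound with $\Glower \succeq \lmin\Identity$, equivalently $\lambda_{\min}(\Glower) \ge \lmin$. I would first argue that this can always be arranged. The matrix $\Matrix{G}_\lambda$ is itself a Loewner lower bound, because $\lmin\Identity \preceq \lambda_{\min}(\Matrix{G}(q))\Identity \preceq \Matrix{G}(q)$ for every $q$. Hence one can always choose, or seed the iterative meet construction with, a lower bound lying above $\Matrix{G}_\lambda$. Second, I would check that the pairwise-meet procedure of Section~\ref{sec:heuristics} preserves any common lower bound of its inputs. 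I would then use the spectral decomposition $\Glower = \sum_i \mu_i u_i u_i^\T$ with $\mu_i \ge \lmin$. This gives
\[
v^\T \Glower v = \sum_i \mu_i (u_i^\T v)^2 \ge \lmin \Norm{v}_2^2,
\]
which proves the inequality.

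For the equality case, equality of the distances is equivalent to $v^\T \Matrix{M} v = 0$, where $\Matrix{M} = \Glower - \lmin\Identity \succeq 0$. For a PSD matrix, write $\Matrix{M} = \Matrix{M}^{1/2}\Matrix{M}^{1/2}$. Then $v^\T\Matrix{M}v = \Norm{\Matrix{M}^{1/2}v}_2^2$, so it vanishes iff $\Matrix{M}^{1/2}v=0$, iff $\Matrix{M}v = 0$, iff $\Glower v = \lmin v$. Equivalently, in the eigen-expansion above, equality forces $(u_i^\T v)^2 = 0$ whenever $\mu_i > \lmin$. Thus equality holds precisely when $v$ is an eigenvector of $\Glower$ with eigenvalue $\lmin$. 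I would also note the degenerate case $v = 0$ ($q_x = q_y$), where equality holds trivially. The ``iff'' should be read with $v\neq 0$, or with the convention that $0$ lies in the $\lmin$-eigenspace.
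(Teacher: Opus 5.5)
Your core argument is the paper's proof: reduce to $v^\T(\Glower-\lmin\Identity)v\ge 0$, take square roots, and read off the equality case from the kernel of the PSD matrix $\Glower-\lmin\Identity$. Your $\Matrix{M}^{1/2}$ step is a more careful version of the paper's one-line equality claim. You are also right that $\lmin\Identity\preceq\Glower$ is an extra hypothesis, which the paper's proof simply asserts. It can be sharpened. Any Loewner lower bound satisfies $\lmin(\Glower)\le\lmin(\Matrix{G}(q))$ for every $q$, hence $\lmin(\Glower)\le\lmin$. So the hypothesis is equivalent to $\lmin(\Glower)=\lmin$ exactly, which also makes the eigenspace in the equality clause nontrivial.

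The step that would fail is your plan to secure this hypothesis through Algorithm~\ref{alg:loewner-meet}. The pairwise meet does \emph{not} preserve common lower bounds of its inputs. If it did, it would be a greatest lower bound, which, as the paper notes, does not exist for incomparable pairs.

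Concretely, run the algorithm with $\Glower=\Identity$ and $\Matrix{G}_{\mathrm{new}}=\Matrix{S}=\mathrm{diag}(2,\tfrac12)$; it returns $\Matrix{C}=\mathrm{diag}(1,\tfrac12)$. Now take $\Matrix{A}=\tfrac{1}{10}\begin{pmatrix}9&2\\2&4\end{pmatrix}$. It satisfies $\Matrix{A}\preceq\Identity$ and $\Matrix{A}\preceq\Matrix{S}$, since both differences have positive diagonals and determinants $0.02$ and $0.07$. Yet $\det(\Matrix{C}-\Matrix{A})=-0.03<0$, so $\Matrix{A}\not\preceq\Matrix{C}$.

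The returned bound does not depend on which factor $\Matrix{L}$ with $\Matrix{L}\Matrix{L}^\T=\Glower$ is used, so the meet commutes with congruences. Conjugating by $\Matrix{A}^{-1/2}$ gives two metric values $\Matrix{A}^{-1}$ and $\Matrix{A}^{-1/2}\Matrix{S}\Matrix{A}^{-1/2}$, both $\succeq\Identity$. Their meet is $\Matrix{A}^{-1/2}\Matrix{C}\Matrix{A}^{-1/2}$, whose minimum eigenvalue is $\approx 0.88<1$, and later meets only decrease the bound.

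So seeding with $\Matrix{G}(q_1)\succeq\Matrix{G}_\lambda$ does not keep the iterates above $\Matrix{G}_\lambda$. For a metric field that dominates $\Identity$ everywhere and takes these two values, the iterated-meet bound gives $\hat{d}<\hat{d}_\lambda$ along its bottom eigenvector.

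Drop that step and state $\lmin\Identity\preceq\Glower$ as an explicit assumption (satisfied, e.g., by $\Glower=\Matrix{G}_\lambda$). With it, your proof is complete and coincides with the paper's.
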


\begin{proof}
Let $v = q_x - q_y$.
Since $\lmin \Identity \preceq \Glower$,
\begin{equation*}
\Transpose{v}\, \Glower\, v \geq \Transpose{v}\, (\lmin \Identity)\, v = \lmin \Norm{v}_2^2.
\end{equation*}
Taking square roots gives $\hat{d}(q_x, q_y) \geq \hat{d}_{\lambda}(q_x, q_y)$.
Equality holds if and only if $\Glower\, v = \lmin\, v$, i.e., $v$ is an eigenvector of $\Glower$ corresponding to $\lmin$.
\end{proof}

As the following theorem shows, the Loewner bound on the metric tensor is sufficient to guarantee that $\hat{d}$ in \eqref{eqn:matrix-distance-lower-bound} never overestimates the true geodesic distance.

\looseness=-1
\begin{theorem}[Admissibility of matrix distance lower bound]
\label{thm:admissibility}
If $\Glower \preceq \Matrix{G}(q)$ for all $q \in \mathcal{Q}$, then the matrix distance lower bound $\hat{d}$ is admissible for the true geodesic distance $d$,
\begin{equation*}
\hat{d}(q_x, q_y) \leq d(q_x, q_y), \quad \forall\, q_x, q_y \in \mathcal{Q}.
\end{equation*}
\end{theorem}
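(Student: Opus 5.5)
The plan is to bound the Riemannian length of every admissible curve from below by its length under the constant metric $\Glower$, and then use that straight lines are shortest for a constant metric. Fix $q_x, q_y \in \mathcal{Q}$ and let $\pi : [0,1] \to \mathcal{Q}$ be any piecewise smooth curve with $\pi(0) = q_x$ and $\pi(1) = q_y$.

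First, apply Definition~\ref{def:loewner-order} pointwise with $v = \dot{\pi}(t)$ at each $t$ where $\dot{\pi}$ exists. This gives
\begin{equation*}
\Transpose{\dot{\pi}(t)}\,\Glower\,\dot{\pi}(t) \leq \Transpose{\dot{\pi}(t)}\,\Matrix{G}(\pi(t))\,\dot{\pi}(t).
\end{equation*}
Because the square root is monotone and the integral is monotone, substituting into~\eqref{eqn:arc-length} yields
\begin{equation*}
L(\pi) \geq \int_0^1 \Norm{\Matrix{L}^\T \dot{\pi}(t)}_2 \, dt.
\end{equation*}

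Second, by linearity of $\Matrix{L}^\T$ and the fundamental theorem of calculus, applied on each smooth piece and summed,
\begin{equation*}
\int_0^1 \Matrix{L}^\T \dot{\pi}(t)\, dt = \Matrix{L}^\T (q_y - q_x).
\end{equation*}
The integral triangle inequality $\Norm{\int f}_2 \leq \int \Norm{f}_2$ then gives $L(\pi) \geq \Norm{\Matrix{L}^\T(q_y - q_x)}_2 = \hat{d}(q_x, q_y)$, using symmetry of the norm. This bound holds for every connecting curve, so taking the infimum in~\eqref{eqn:distance} gives $\hat{d}(q_x, q_y) \leq d(q_x, q_y)$.

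I expect no real obstacle here. The only points needing care are:
\begin{itemize}
\item handling piecewise smoothness, which is done by splitting $[0,1]$ at the break points;
\item noting that the bound must hold for every curve before the infimum is taken;
\item observing that the argument never requires a straight line to lie in $\mathcal{Q}$. Only the endpoint displacement enters, so nonconvexity of $\mathcal{Q}$ does not matter.
\end{itemize}
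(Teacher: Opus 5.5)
Your proposal is correct and follows the paper's proof essentially step for step. The Loewner inequality applied pointwise gives $L(\pi) \geq \int_0^1 \Norm{\Matrix{L}^\T \dot{\pi}}_2\,dt$, the integral triangle inequality bounds this below by $\Norm{\Matrix{L}^\T(q_y - q_x)}_2$, and taking the infimum over connecting curves finishes the argument; your remarks on piecewise smoothness and on not needing the straight segment to lie in $\mathcal{Q}$ only make explicit what the paper leaves implicit.
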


\begin{proof}
Let $\pi : [0,1] \to \mathcal{Q}$ be any piecewise smooth curve from $q_x$ to $q_y$.
Since $\Glower \preceq \Matrix{G}(q)$ for all $q \in \mathcal{Q}$,
\begin{align*}
L(\pi) &= \int_0^1 \sqrt{\Transpose{\dot{\pi}}\, \Matrix{G}(\pi)\, \dot{\pi}} \; dt \geq \int_0^1 \Norm{\Matrix{L}^\T \dot{\pi}}_2 \; dt \\[3pt]
&\geq \Norm{\Matrix{L}^\T (q_y - q_x)}_2 = \hat{d}(q_x, q_y).
\end{align*}
The infimum over all curves yields $\hat{d}(q_x, q_y)\!\leq\!d(q_x, q_y)$.
\end{proof}

An immediate consequence is that the matrix lower bound generalizes the standard Euclidean distance heuristic.

\looseness=-1
\begin{corollary}[Euclidean special case]
\label{cor:euclidean-recovery}
When $\Glower = \Identity$, the matrix distance lower bound reduces to the standard Euclidean distance,
$\hat{d}(q_x, q_y) = \sqrt{(q_x-q_y)^\top \Identity (q_x-q_y)} = \Norm{q_x - q_y}_2$.
\end{corollary}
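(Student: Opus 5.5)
This is a direct substitution into Definition~\ref{def:matrix-distance-lower-bound}, so my plan is to unfold the definitions rather than invoke any deeper machinery. First I check the hypothesis: $\Identity \in \PDMatrices{n}$, so $\Glower = \Identity$ is an admissible choice in that definition. Whether it is also a Loewner lower bound in the sense of Definition~\ref{def:loewner-lower-bound} depends on the metric. It holds exactly when $\lmin(\Matrix{G}(q)) \geq 1$ for all $q$, for instance in the Euclidean case $\Matrix{G} \equiv \Identity$. The corollary only asserts an algebraic identity for the formula \eqref{eqn:matrix-distance-lower-bound}, so I will treat it that way. I will remark that admissibility via Theorem~\ref{thm:admissibility} requires the additional condition $\Identity \preceq \Matrix{G}(q)$.

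Next I compute the Cholesky factor. The factorization $\Identity = \Matrix{L}\Matrix{L}^\T$ with $\Matrix{L}$ lower triangular and having positive diagonal is unique, and $\Matrix{L} = \Identity$ satisfies it. Hence $\Matrix{L}^\T = \Identity$, and \eqref{eqn:matrix-distance-lower-bound} gives $\hat{d}(q_x,q_y) = \Norm{\Identity (q_x - q_y)}_2 = \Norm{q_x - q_y}_2$.

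As a consistency check, I will also evaluate the weighted-norm form directly: $\Norm{q_x - q_y}_{\Identity} = \sqrt{(q_x-q_y)^\T \Identity (q_x-q_y)} = \sqrt{\sum_i (q_{x,i}-q_{y,i})^2}$, which is again the Euclidean norm. The two forms in \eqref{eqn:matrix-distance-lower-bound} therefore agree.

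There is no genuine obstacle here. The only point requiring care is the justification that $\Matrix{L}=\Identity$, which follows from uniqueness of the Cholesky factor. Even without that, any factor $\Matrix{L}$ with $\Matrix{L}\Matrix{L}^\T = \Identity$ is orthogonal, so $\Matrix{L}^\T$ is also orthogonal and preserves the Euclidean norm. The conclusion is therefore independent of which factorization is chosen.
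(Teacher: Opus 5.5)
Your proposal is correct and takes essentially the same approach as the paper: the paper gives no separate proof beyond the direct substitution $\hat{d}(q_x,q_y) = \sqrt{(q_x-q_y)^\T \Identity (q_x-q_y)} = \Norm{q_x-q_y}_2$ into Definition~\ref{def:matrix-distance-lower-bound}. Your additional points — uniqueness of the Cholesky factor (or orthogonality of any factor), and admissibility requiring $\Identity \preceq \Matrix{G}(q)$ — are sound, and the latter mirrors the paper's subsequent remark.
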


\looseness=-1
While Corollary~\ref{cor:euclidean-recovery} recovers the Euclidean case, the converse does not hold; that is, using the Euclidean distance as a heuristic for any Riemannian metric can easily violate admissibility.

\looseness=-1
\begin{remark}
Corollary~\ref{cor:euclidean-recovery} shows that the Euclidean distance corresponds to $\Glower = \Identity$, which by Theorem~\ref{thm:admissibility} is admissible only if $\Identity \preceq \Matrix{G}(q)$ for all $q$, i.e., $\lmin(\Matrix{G}(q)) \geq 1$ everywhere.
When this fails, the metric contracts relative to the Euclidean norm along the corresponding eigendirection, causing the Euclidean distance to overestimate true distances and violate admissibility.
The scalar bound (Definition~\ref{def:scalar-distance-lower-bound}) recovers admissibility by scaling distances uniformly; our matrix bound does so directionally, tightening only in the directions where the metric permits.
\end{remark}

\vspace{-4mm}
\subsection{Computing the Loewner Lower Bound}
\label{sec:computing-loewner-lower-bound}

\looseness=-1
The matrix heuristic in Definition~\ref{def:loewner-lower-bound} requires a constant $\Glower$ that is dominated by $\Matrix{G}(q)$ at every configuration in $\mathcal{Q}$.
Computing the tightest such bound is in general intractable, as it requires solving a semi-infinite program over the entire configuration space.
We reduce this problem to a sequence of pairwise updates: given the current bound $\Glower$ and a new metric evaluation $\Matrix{G}_{\mathrm{new}}$, Algorithm~\ref{alg:loewner-meet} computes the \emph{Loewner meet}, the maximal SPD matrix dominated by both.

\setlength{\textfloatsep}{1mm}%
\begin{algorithm}[!t]
\caption{Loewner Meet ($\Matrix{L}$, $\Matrix{G}_{\mathrm{new}}$)}
\label{alg:loewner-meet}
$\Matrix{S} \gets \Inv{\Matrix{L}}\, \Matrix{G}_{\mathrm{new}}\, \Inv{(\Matrix{L}^\T)}$\label{algo1:line1}\;
$\Matrix{V}, \text{diag}(\lambda_1, \dots, \lambda_n) \gets \text{EigenDecomposition}(\Matrix{S})$\label{algo1:line2}\;
\If{$\min_k \lambda_k < 1$\label{algo1:line3}}{
    $\tilde\lambda_k \gets \min(\lambda_k, 1)$ for each $k \in \{1, \dots, n\}$\label{algo1:line4}\;
    $\tilde{\Matrix{\Lambda}} \gets \text{diag}(\tilde\lambda_1, \dots, \tilde\lambda_n)$\label{algo1:line5}\;
    $\Glower \gets \Matrix{L}\, \Matrix{V}\, \tilde{\Matrix{\Lambda}}\, \Matrix{V}^\T \Matrix{L}^\T$\label{algo1:line6}\;
    $\Matrix{L} \gets \text{Cholesky}(\Glower)$\label{algo1:line7}\;
}
\Return{$\Matrix{L}$}\label{algo1:line8}
\end{algorithm}

The algorithm maintains the Cholesky factor $\Matrix{L}$ of the current bound $\Glower$.
It first transforms $\Matrix{G}_{\mathrm{new}}$ into a whitened coordinate system where $\Glower$ is the identity (Line~\ref{algo1:line1}).
The eigenvalues of the transformed matrix $\Matrix{S}$ indicate how $\Matrix{G}_{\mathrm{new}}$ compares to $\Glower$ along each principal direction (Line~\ref{algo1:line2}).
Eigenvalues exceeding $1$ correspond to directions where $\Matrix{G}_{\mathrm{new}}$ already dominates $\Glower$ and are clamped; eigenvalues less than $1$ indicate directions where $\Matrix{G}_{\mathrm{new}}$ is smaller, requiring the bound to be loosened (Lines~\ref{algo1:line4}--\ref{algo1:line5}).
Reconstructing the matrix in the original coordinate system yields the maximal lower bound of $\Glower$ and $\Matrix{G}_{\mathrm{new}}$ (Line~\ref{algo1:line6}).

\begin{remark}
Algorithm~\ref{alg:loewner-meet} is a generic subroutine: given any sequence of metric evaluations $\Matrix{G}(q_1), \ldots, \Matrix{G}(q_N)$, applying it iteratively from $\Glower = \Matrix{G}(q_1)$ yields a valid Loewner lower bound for all observed matrices.
The only design choice here is the selection strategy for the configurations $q_i$, which can range from uniform batch sampling to targeted optimization; we detail the specific strategy used in Section~\ref{sec:experiments}.
\end{remark}

%% file: sections/informed-sets.tex
\vspace{-4mm}
\section{Riemannian Informed Sets}
\label{sec:informed-sets}

\begin{figure*}[!t]
\centering
\includegraphics[width=0.99\textwidth]{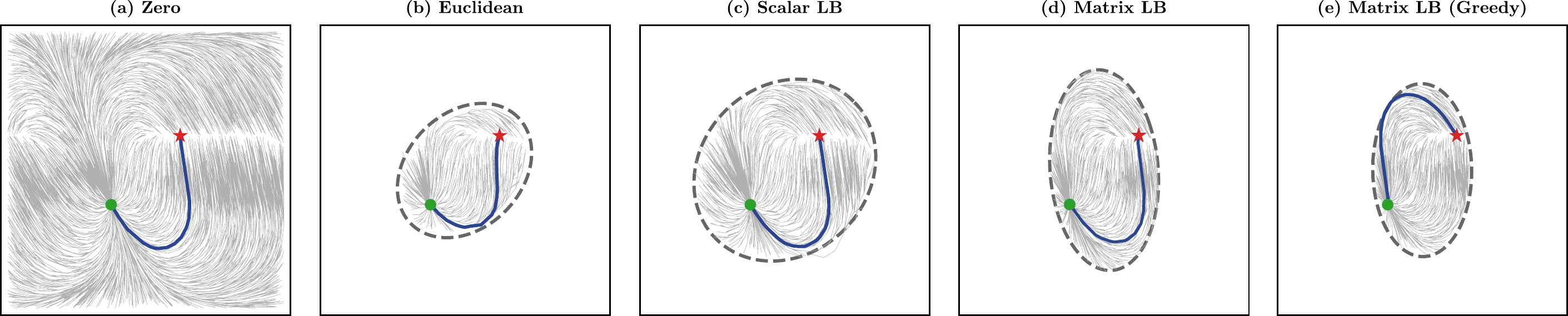}
\caption{%
Search trees and informed sets for a 2-DoF planar manipulator planning from start (\textcolor{green!50!black}{$\bullet$}) to goal (\textcolor{red}{$\star$}) under the kinetic energy Riemannian metric.
(a)~With no heuristic, the planner samples uniformly over $\mathcal{Q}$, producing a dense tree with no directional preference.
(b)~The Euclidean heuristic is inadmissible for this metric; the resulting isotropic prolate hyperspheroid (PHS) misguides sampling and yields a suboptimal solution.
(c)~The scalar eigenvalue bound produces an admissible but isotropic PHS that does not exploit the metric's anisotropy.
(d)~Our matrix-valued lower bound yields an anisotropic ellipsoid aligned with the metric structure, focusing sampling along low-cost directions.
(e)~The greedy variant further concentrates samples near the current best solution.
}
\label{fig:qualitative-2d}
\vspace{-4mm}
\end{figure*}

Having established an admissible distance bound in Section~\ref{sec:heuristics}, we now show that the resulting informed set has a clean geometric characterization.
By exploiting this geometry, we develop a direct, \emph{rejection-free} sampling approach that strictly improves upon scalar-bound heuristics.

\vspace{-3mm}
\subsection{Isometric Embedding}
\label{sec:isometric-embedding}

\looseness=-1
The Cholesky factorization $\Glower = \Matrix{L}\Matrix{L}^\T$ of the lower bound matrix defines a natural coordinate transformation from the configuration space $\mathcal{Q}$ to an isotropic Euclidean space $\mathcal{X} \subseteq \Real^{n}$.
Under this transformation, the anisotropic weighted norm induced by $\Glower$ reduces to the standard Euclidean norm, enabling the use of classical Euclidean geometric tools in $\mathcal{X}$.

\begin{theorem}[Isometric embedding]
\label{thm:isometry}
The map $\phi \colon \mathcal{Q} \to \mathcal{X}$ defined by $\phi(q) = \Matrix{L}^\T q$ is a linear isometry from $(\mathcal{Q}, \Glower)$ to $(\mathcal{X}, \Identity)$.
That is, for all $q_1, q_2 \in \mathcal{Q}$,
\begin{equation*}
\hat{d}(q_1, q_2) = \Norm{\phi(q_1) - \phi(q_2)}_2.
\label{eq:isometry}
\end{equation*}
\end{theorem}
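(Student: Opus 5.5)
The proof is a direct computation from Definition~\ref{def:matrix-distance-lower-bound}, using only linearity of $\phi$ and the Cholesky factorization $\Glower = \Matrix{L}\Matrix{L}^\T$.

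First, linearity gives $\phi(q_1) - \phi(q_2) = \Matrix{L}^\T q_1 - \Matrix{L}^\T q_2 = \Matrix{L}^\T (q_1 - q_2)$. Setting $v = q_1 - q_2$, we expand the squared Euclidean norm:
\begin{equation*}
\Norm{\phi(q_1) - \phi(q_2)}_2^2 = \Transpose{v}\,\Matrix{L}\Matrix{L}^\T v = \Transpose{v}\,\Glower\, v = \Norm{q_1 - q_2}_{\Glower}^2 .
\end{equation*}
Both sides are nonnegative, so taking square roots yields $\hat{d}(q_1,q_2) = \Norm{\phi(q_1)-\phi(q_2)}_2$. This is exactly the identity already written into \eqref{eqn:matrix-distance-lower-bound}, now read as a statement about $\phi$.

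The isometry claim also needs $\phi$ to be a bijection onto its image $\mathcal{X} = \phi(\mathcal{Q})$. Since $\Glower \in \PDMatrices{n}$, its Cholesky factor $\Matrix{L}$ is lower triangular with strictly positive diagonal. Hence $\Matrix{L}^\T$ is invertible, and $\phi$ is a linear bijection $\Real^n \to \Real^n$ with inverse $x \mapsto \Matrix{L}^{-\T} x$. Distance preservation then makes $\phi$ an isometry between the metric spaces $(\mathcal{Q}, \hat{d})$ and $(\mathcal{X}, \Norm{\cdot}_2)$.

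At the level of the metric tensor, the pullback of the Euclidean metric $\Identity$ under $\phi$ is $\Matrix{L}\,\Identity\,\Matrix{L}^\T = \Glower$. So $\phi$ is a Riemannian isometry from the constant metric $\Glower$ to $\Identity$, not just a distance-preserving map.

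There is no real obstacle. The only point needing care is this invertibility argument, which depends on $\Glower$ being strictly positive definite, as Definition~\ref{def:loewner-lower-bound} requires.
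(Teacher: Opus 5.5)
Your proposal is correct and matches the paper's proof, which is the same direct computation $\Norm{\phi(q_1)-\phi(q_2)}_2 = \Norm{\Matrix{L}^\T(q_1-q_2)}_2 = \hat{d}(q_1,q_2)$ from Definition~\ref{def:matrix-distance-lower-bound}. Your extra points are sound and make explicit what the paper leaves implicit: $\Matrix{L}^\T$ is invertible because $\Glower$ is positive definite, and the pullback satisfies $\Matrix{L}\,\Identity\,\Matrix{L}^\T = \Glower$ (the paper uses $\Matrix{L}^{-\T}$ right after the theorem without comment).
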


\begin{proof}
By direct computation and using Definition~\ref{def:matrix-distance-lower-bound},
\begin{equation*}
\Norm{\phi(q_1) - \phi(q_2)}_2 = \Norm{\Matrix{L}^\T(q_1 - q_2)}_2 = \hat{d}(q_1, q_2). \qedhere
\end{equation*}
\end{proof}

\looseness=-1
The inverse map $\Inv{\phi}(x) = \Matrix{L}^{-\T} x$ transforms points from $\mathcal{X}$ back to the original configuration space $\mathcal{Q}$.
We now use this isometry to characterize the geometry of the informed set.
For a cost bound $c_{\mathrm{best}}$, the Riemannian informed set under the matrix-valued heuristic is
\begin{equation}
\label{eq:informed-set}
X_{\hat{f}}^{\mathcal{R}} = \Set{q \in \mathcal{Q} : \hat{d}(\qstart, q) + \hat{d}(q, \qgoal) < c_{\mathrm{best}}}.
\end{equation}
By Theorem~\ref{thm:isometry}, the heuristic sum in \eqref{eq:informed-set} equals $\Norm{\xstart - x}_2 + \Norm{x - \xgoal}_2$, where $\xstart = \phi(\qstart)$, $\xgoal = \phi(\qgoal)$, and $x = \phi(q)$.
The set of points satisfying $\Norm{\xstart - x}_2 + \Norm{x - \xgoal}_2 \leq c_{\mathrm{best}}$ is precisely a \emph{prolate hyperspheroid} (PHS) in $\mathcal{X}$ with foci $\xstart$ and $\xgoal$, which we denote $X_{\hat{f}}^{\mathcal{X}}$~\cite{gammell2018informed}.
Since $\phi$ is an invertible linear map, the preimage of $X_{\hat{f}}^{\mathcal{X}}$ is an ellipsoid in the original configuration space.

\looseness=-1
\begin{corollary}[Riemannian informed set is an ellipsoid]
\label{cor:ellipsoid}
The Riemannian informed set $X_{\hat{f}}^{\mathcal{R}}$ in~\eqref{eq:informed-set} satisfies
\begin{equation}
X_{\hat{f}}^{\mathcal{R}} = \Inv{\phi}\!(X_{\hat{f}}^{\mathcal{X}}).
\label{eq:ellipsoid}
\end{equation}
In the original coordinates, $\smash{X_{\hat{f}}^{\mathcal{R}}}$ under $\Glower = \Matrix{L}\Matrix{L}^\T$ is an ellipsoid whose principal axes are rotated and scaled by $\Matrix{L}^{-\T}$ relative to those of $X_{\hat{f}}^{\mathcal{X}}$.
\end{corollary}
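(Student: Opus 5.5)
The plan is to prove the set equality \eqref{eq:ellipsoid} by double inclusion, using Theorem~\ref{thm:isometry} to rewrite the defining inequality of $X_{\hat{f}}^{\mathcal{R}}$ in the isotropic coordinates. Then we describe the ellipsoid explicitly in the original coordinates.

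For the equality, fix $q \in \mathcal{Q}$ and set $x = \phi(q) = \Matrix{L}^\T q$. Theorem~\ref{thm:isometry} gives $\hat{d}(\qstart, q) = \Norm{\xstart - x}_2$ and $\hat{d}(q, \qgoal) = \Norm{x - \xgoal}_2$. Hence $q \in X_{\hat{f}}^{\mathcal{R}}$ if and only if $x \in X_{\hat{f}}^{\mathcal{X}}$, where $X_{\hat{f}}^{\mathcal{X}}$ is the PHS $\Set{x : \Norm{\xstart - x}_2 + \Norm{x - \xgoal}_2 < c_{\mathrm{best}}}$. To keep the two sets literally equal, I take the PHS with the same strict inequality as \eqref{eq:informed-set}, i.e.\ the open PHS.

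Since $\Matrix{L}$ is the Cholesky factor of an SPD matrix, it is invertible. So $\phi$ is a bijection of $\Real^n$ with inverse $\Inv{\phi}(x) = \Matrix{L}^{-\T}x$, and the equivalence above says exactly that $X_{\hat{f}}^{\mathcal{R}} = \Inv{\phi}(X_{\hat{f}}^{\mathcal{X}})$. If $\mathcal{Q}$ is a proper subset of $\Real^n$, the statement should be read with $X_{\hat{f}}^{\mathcal{X}}$ intersected with $\phi(\mathcal{Q})$. This bookkeeping point is the only place care is needed.

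For the ellipsoid description, use the standard parametrization of the PHS from~\cite{gammell2018informed}:
\[
X_{\hat{f}}^{\mathcal{X}} = \Set{\Matrix{C}\Matrix{R}\, y + x_{\mathrm{centre}} : \Norm{y}_2 < 1}.
\]
Here $x_{\mathrm{centre}} = (\xstart+\xgoal)/2$, $\Matrix{C}$ is the rotation aligning the first axis with $\xgoal - \xstart$, and
\[
\Matrix{R} = \mathrm{diag}\bigl(c_{\mathrm{best}}/2,\ \sqrt{c_{\mathrm{best}}^2 - c_{\min}^2}/2, \ldots\bigr),
\]
with $c_{\min} = \Norm{\xgoal-\xstart}_2 = \hat{d}(\qstart,\qgoal)$. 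This assumes $c_{\mathrm{best}} > c_{\min}$, which holds because $\hat{d}$ is admissible (Theorem~\ref{thm:admissibility}); otherwise the set is empty.

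Applying $\Matrix{L}^{-\T}$ then gives
\[
X_{\hat{f}}^{\mathcal{R}} = \Set{\Matrix{L}^{-\T}\Matrix{C}\Matrix{R}\, y + \Matrix{L}^{-\T}x_{\mathrm{centre}} : \Norm{y}_2<1},
\]
which is the image of the unit ball under an invertible affine map and therefore an ellipsoid. Its centre is $\Matrix{L}^{-\T}x_{\mathrm{centre}} = (\qstart+\qgoal)/2$. Its shape matrix is $\Matrix{L}^{-\T}\Matrix{C}\Matrix{R}$, so its principal axes come from the singular value decomposition of $\Matrix{L}^{-\T}\Matrix{C}\Matrix{R}$. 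This is the precise meaning of ``rotated and scaled by $\Matrix{L}^{-\T}$''. Note that $\Matrix{L}^{-\T}$ is generally not orthogonal, so the axes of the preimage are not simply the images of the PHS axes. I would phrase the conclusion via this affine-image description rather than claim the axes map one-to-one.
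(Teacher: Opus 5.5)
Your proposal is correct and follows the same route as the paper: rewrite the heuristic sum in isotropic coordinates via Theorem~\ref{thm:isometry}, recognize the resulting focal-sum set as the PHS $X_{\hat{f}}^{\mathcal{X}}$, and pull it back through the invertible linear map $\phi$ to get an ellipsoid. Your extra care sharpens points the paper treats informally: you match the strict inequality, intersect with $\phi(\mathcal{Q})$ when $\mathcal{Q} \neq \Real^n$, write the set explicitly as the affine image of the unit ball under $\Matrix{L}^{-\T}\Matrix{C}\Matrix{R}$, and read the principal axes from its SVD, since $\Matrix{L}^{-\T}$ is not orthogonal.
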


\looseness=-1
Compared to the scalar bound (Definition~\ref{def:scalar-distance-lower-bound}), which produces an isotropic PHS regardless of the metric's anisotropy, the matrix bound produces an ellipsoid that is elongated along low-cost directions and compressed along high-cost directions.
Consequently, this directional structure of $\smash{X_{\hat{f}}^{\mathcal{R}}}$ focuses sampling in regions most likely to improve the current solution.

\vspace{-4mm}
\subsection{Direct Sampling}
\label{sec:direct-sampling}

The ellipsoidal characterization in Corollary~\ref{cor:ellipsoid} enables a direct sampling procedure for $\smash{X_{\hat{f}}^{\mathcal{R}}}$.
Instead of drawing samples from the entire $\mathcal{Q}$ and rejecting those outside $\smash{X_{\hat{f}}^{\mathcal{R}}}$, we sample uniformly from $X_{\hat{f}}^{\mathcal{X}}$ in the isotropic space and transform back to $\mathcal{Q}$.
This procedure, presented in Algorithm~\ref{alg:direct-sampling}, produces uniform samples on $\smash{X_{\hat{f}}^{\mathcal{R}}}$ without any geometric rejection.

\setlength{\textfloatsep}{1mm}%
\begin{algorithm}[!t]
\caption{Sampling from $X_{\hat{f}}^{\mathcal{R}}$ ($\Matrix{L}$, $\qstart$, $\qgoal$, $c_{\mathrm{best}}$)}
\label{alg:direct-sampling}
$\xstart \gets \Matrix{L}^\T \qstart$, $\xgoal \gets \Matrix{L}^\T \qgoal$\label{algo2:line1}\;
$x \gets \mathrm{Uniform}(X_{\hat{f}}^{\mathcal{X}})$\label{algo2:line2}\;
$q \gets \Matrix{L}^{-\T} x$\label{algo2:line3}\;
\If{$q \in \mathcal{Q}$\label{algo2:line4}}{
    \Return{$q$}\;
}
\Return{reject and resample}\label{algo2:line5}
\end{algorithm}

\looseness=-1
The algorithm first maps $\qstart$ and $\qgoal$ to $\mathcal{X}$ via $\phi$ (Line~\ref{algo2:line1}).
It then draws a uniform sample from $X_{\hat{f}}^{\mathcal{X}}$ using the standard PHS decomposition of~\cite{gammell2018informed} (Line~\ref{algo2:line2}).
Finally, it maps the sample back to the original configuration space $\mathcal{Q}$ via $\Inv{\phi}$ (Line~\ref{algo2:line3}).
Because $\Inv{\phi}$ is an invertible linear map, the change of variables preserves uniformity with respect to the Lebesgue measure: the resulting sample is uniform on $\smash{X_{\hat{f}}^{\mathcal{R}}} \subset \mathcal{Q}$ in the same sense as in standard Euclidean informed sampling~\cite{gammell2018informed}\footnote{This is uniformity with respect to the Lebesgue measure on $\mathcal{Q}$, not the true Riemannian volume induced by $\Matrix{G}(q)$; the latter would require knowledge of the full metric field, which our heuristic explicitly avoids.}.
The only source of rejection is bound satisfaction, that is, the ellipsoid may extend beyond $\mathcal{Q}$ (Line~\ref{algo2:line4}), which is identical to the rejection that arises in standard Euclidean PHS sampling~\cite{gammell2018informed}.
There is no additional geometric rejection from the informed set itself, unlike heuristic-guided rejection sampling for generic distance functions.

\vspace{-2mm}
\subsection{Geometric Interpretation}
\label{sec:geometric-interpretation}

\looseness=-1
The spectral decomposition of $\Glower$ provides a concrete geometric picture of the Riemannian informed set.
Let $\Glower = \Matrix{V}\Matrix{\Lambda}\Transpose{\Matrix{V}}$ be the eigendecomposition, where each column of $\Matrix{V}$ is an eigenvector with associated eigenvalue $\lambda_i$.
Each eigenvector defines a principal direction of the metric lower bound, and the corresponding eigenvalue encodes the cost of motion along that direction.
The isometry $\phi(q) = \Matrix{L}^\T q$ rotates the configuration space to align with these principal directions and scales each axis by $\sqrt{\lambda_i}$, so that all directions carry equal cost in $\mathcal{X}$.
The inverse map $\Inv{\phi}$ reverses this normalization, warping the isotropic PHS $X_{\hat{f}}^{\mathcal{X}}$ into the anisotropic ellipsoid $\smash{X_{\hat{f}}^{\mathcal{R}}}$ in the original coordinates.
Each principal axis of $\smash{X_{\hat{f}}^{\mathcal{R}}}$ is scaled by $1/\sqrt{\lambda_i(\Glower)}$ relative to those of the PHS, that is, the ellipsoid extends further along directions where motion is inexpensive and contracts where it is costly.

\looseness=-1
This ellipsoidal characterization also yields a closed-form expression for the Lebesgue measure (volume) of the Riemannian informed set $\smash{X_{\hat{f}}^{\mathcal{R}}}$;
note that this upper bounds the volume of the true Riemannian informed set defined by the geodesic distance $d$, i.e., $\smash{\mathcal{X}_{f}^{\mathcal{R}}} \subseteq \smash{\mathcal{X}_{\hat{f}}^{\mathcal{R}}}$.
Let $\mu(\cdot)$ denote the Lebesgue measure in $\Real^n$.
Since $\Inv{\phi}$ is a linear map with Jacobian determinant $1/\sqrt{\Determinant{\Glower}}$, the change-of-variables formula gives
\begin{equation}
\label{eq:ris-volume}
\mu\left(X_{\hat{f}}^{\mathcal{R}}\right) = \frac{\mu_{\mathrm{PHS}}\left(c_{\mathrm{best}},\, d_{\mathrm{foci}}\right)}{\sqrt{\Determinant{\Glower}}},
\end{equation}
where $d_{\mathrm{foci}} = \hat{d}(\qstart, \qgoal)$ is the heuristic distance between start and goal, and $\mu_{\mathrm{PHS}}(c, d)$ denotes the known Lebesgue measure of an $n$-dimensional PHS with transverse diameter $c$ and focal distance $d$~\cite{gammell2018informed}.

Equation~\eqref{eq:ris-volume} shows that the matrix bound reduces the informed set's volume through two independent mechanisms.
First, because $\hat{d} \geq \hat{d}_\lambda$, $d_{\mathrm{foci}}$ is larger under the matrix bound, producing a thinner PHS in the isotropic space and reducing the numerator.
Second, the denominator $\sqrt{\Determinant{\Glower}} = \sqrt{\prod_{i=1}^{n} \lambda_i(\Glower)}$ grows with the full eigenspectrum of $\Glower$, not just its minimum eigenvalue as under the scalar bound, yielding a strictly smaller informed set whenever $\Glower$ is anisotropic.
In high dimensions, even modest per-direction improvements accumulate multiplicatively through the determinant, producing informed sets that are orders of magnitude smaller than their scalar counterparts.

%% file: sections/experiments.tex
\section{Experiments}
\label{sec:experiments}

\begin{figure}[t]
\centering
\includegraphics[width=0.95\linewidth]{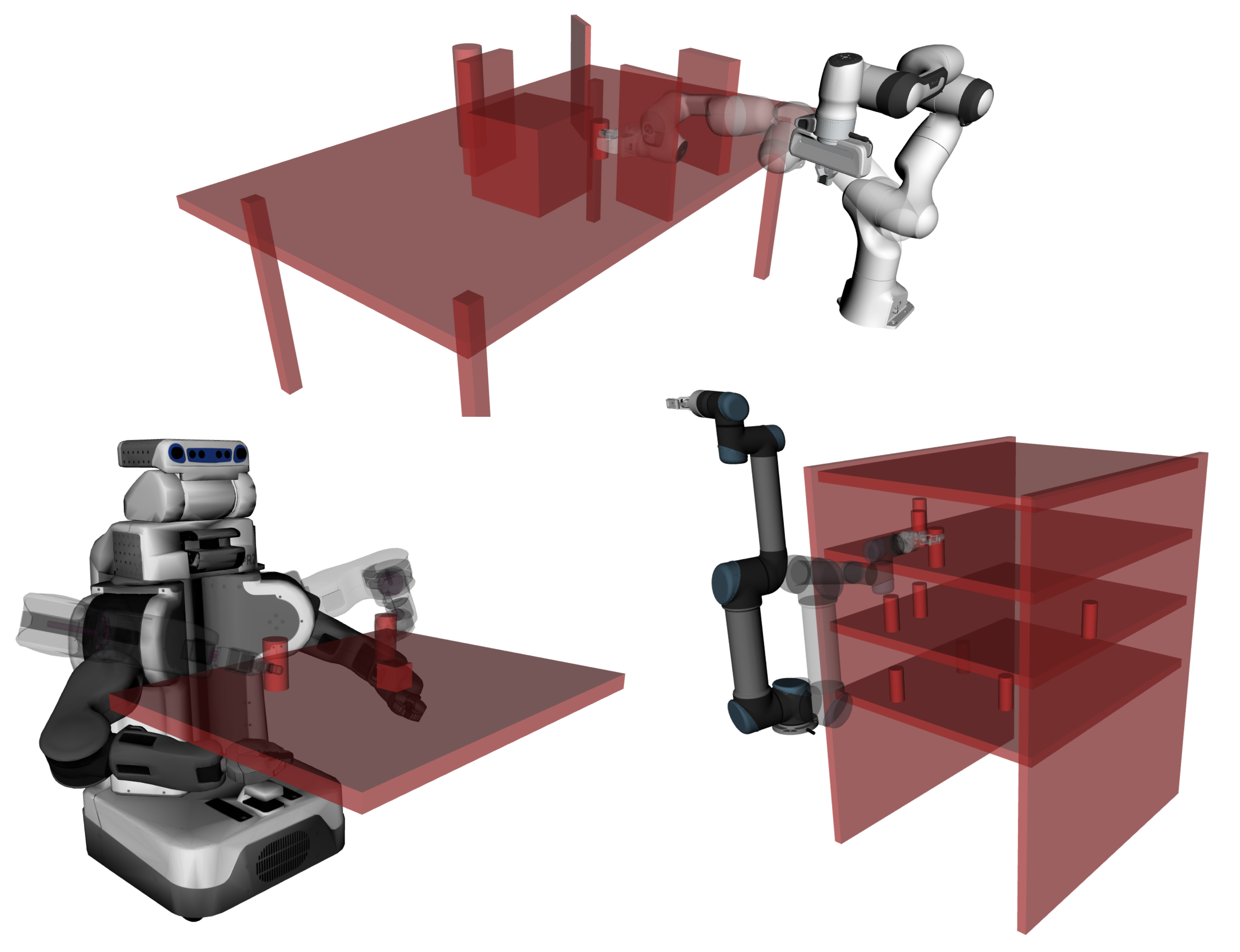}
\caption{
Planning scenes for the manipulation experiments in Section~\ref{sec:manipulation-problems}.
}
\label{fig:planning-scenes}
\end{figure}

We evaluate our matrix-valued heuristic on manipulation problems from the MotionBenchMaker dataset~\cite{chamzas2021motionbenchmaker}, spanning three robots of increasing dimensionality: a 6-DoF UR5, a 7-DoF Franka, and a 14-DoF PR2 dual-arm system (Figure~\ref{fig:planning-scenes}).
For each robot, we consider three Riemannian metrics: a diagonal weighted metric, the kinetic energy metric derived from the robot's mass matrix, and a pullback metric induced by the manipulator Jacobian.
The weighted metric penalizes unnecessary joint motion by assigning a weight of 100 to joints with small start-to-goal displacement and 1 otherwise.
The pullback metric uses a regularization value of 0.1 to ensure positive-definiteness near singularities.
All experiments minimize the Riemannian arc length~\eqref{eqn:arc-length} under the given metric, using the midpoint approximation of~\cite{kyaw2026geometry}.

\looseness=-1
We compare five asymptotically optimal sampling-based planners, BIT*~\cite{gammell2020batch}, AIT* and EIT*~\cite{strub2022adaptively}, GRRT*~\cite{kyaw2024greedy}, and AORRTC~\cite{wilson2025aorrtc}, each evaluated under three heuristic configurations: the Euclidean distance, the zero heuristic \mbox{($\hat{d} = 0$)}, and our proposed Loewner lower bound.
For \mbox{GRRT*}, we set the greedy biasing ratio to $\epsilon = 0$, which disables greedy exploitation and reduces the planner to an informed variant of RRT*-Connect~\cite{klemm2015rrt}.
We also include {GRRT*} with $\epsilon = 0.9$ and matrix-valued heuristic to demonstrate that greedy informed sampling further accelerates convergence when combined with the Loewner bound.
The scalar eigenvalue bound is omitted from the main comparison; the ablation in Section~\ref{sec:exp-heuristic-quality} confirms that it is consistently looser than the matrix bound across all robots and metrics, making it redundant as a separate baseline.

\looseness=-1
All planners are implemented in C++ using the Open Motion Planning Library (OMPL)~\cite{sucan2012open} with VAMP's collision checking backend~\cite{thomason2024motions}.
The kinetic energy metric uses the mass matrix computed via Pinocchio library~\cite{carpentier2019pinocchio}.
The Loewner lower bound $\Glower$ is precomputed for each robot--metric pair via Algorithm~\ref{alg:loewner-meet}, using an optimization-based approach to select metric evaluation points rather than uniform random sampling (Appendix~\ref{sec:appendix-opt-bound}).
We find that this approach converges to a tighter bound more efficiently, particularly in high dimensions.
For each scenario, we run 100 trials with different pseudorandom seeds and a planning budget of 120\,s, and report the median solution cost as a function of planning time.

\vspace{-2mm}
\subsection{Heuristic Quality}
\label{sec:exp-heuristic-quality}

This section evaluates the tightness of each heuristic to validate Theorem~\ref{thm:admissibility} empirically.
For each robot and metric, we sample 10,000 random configuration pairs and compute the ratio $\hat{d}(\cdot) / d(\cdot)$, where $d$ is the numerically estimated geodesic distance.
A ratio of 1 indicates a perfect heuristic; below 1 is admissible; above 1 is inadmissible.

\begin{figure}[!t]
\centering
\vspace{-2mm}
\includegraphics[width=0.9\linewidth]{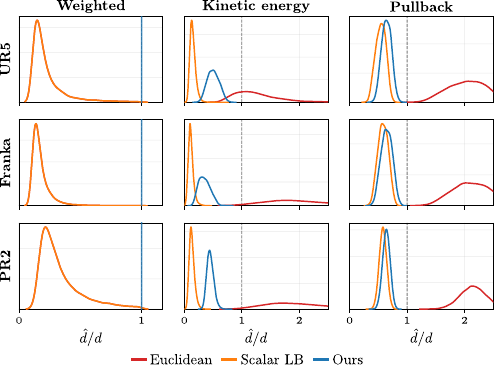}
\caption{%
\looseness=-1
Distribution of the heuristic-to-geodesic-distance ratio $\hat{d} / d$ for 10,000 random configuration pairs across all three robots and metrics; the dashed line marks the admissibility threshold at ratio 1.
}
\label{fig:heuristic-ratio}
\end{figure}

Figure~\ref{fig:heuristic-ratio} shows the heuristic ratio distributions across all robots and metrics.
The Euclidean heuristic is inadmissible under the kinetic energy and pullback metrics, with median ratios exceeding 1 in all scenarios, confirming that it is unsuitable as a cost-to-go estimate on non-Euclidean spaces.
The scalar bound is always admissible but highly conservative.
Under the constant weighted metric, the matrix bound recovers the geodesic distance exactly (a ratio of 1), trivially validating the approach since $\Glower = \Matrix{G}$.
The advantage is most pronounced under the kinetic energy metric, where the matrix bound is 3--4 times tighter than the scalar bound, capturing the directional structure of the inertia tensor.
Under the pullback metric, $\lambda$-regularization near kinematic singularities makes $\Glower$ nearly isotropic, limiting improvement over the scalar bound to approximately 10--15\%.
These per-metric differences directly explain the convergence patterns in Section~\ref{sec:manipulation-problems}.

\begin{figure*}[t]
\centering
\includegraphics[width=0.95\linewidth]{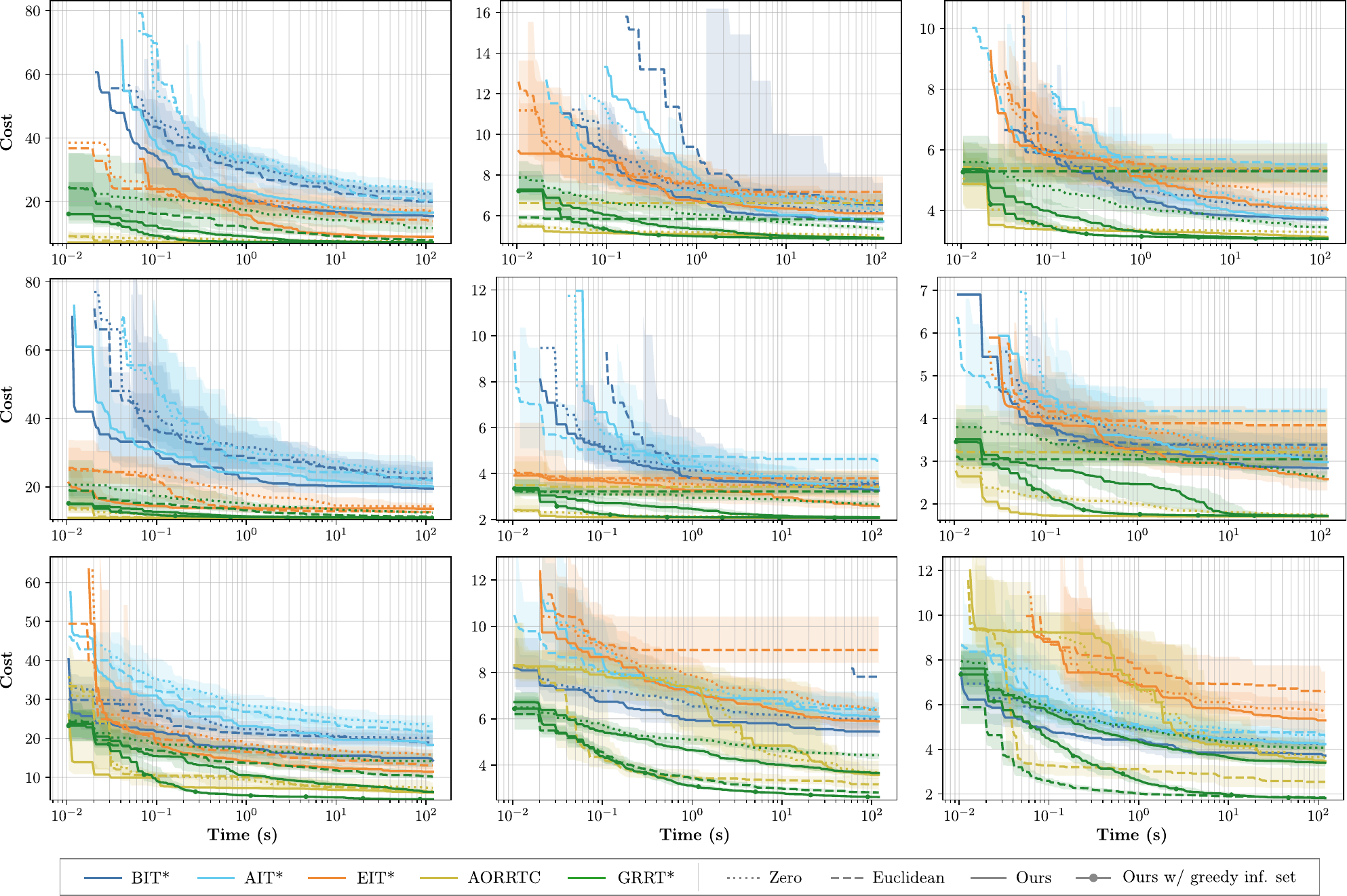}
\caption{%
Anytime planning performance across all nine robot--metric scenarios over 100 trials.
Rows correspond to robots (top to bottom: UR5, Franka, PR2) and columns to metrics (left to right: weighted Euclidean, kinetic energy, pullback).
Each panel plots median solution cost against planning time for all tested planners under the zero, Euclidean, and matrix lower bound heuristics; shaded regions denote non-parametric 99\% confidence intervals on the median.
}
\label{fig:convergence-plots}
\vspace{-4mm}
\end{figure*}

\subsection{Manipulation Problems}
\label{sec:manipulation-problems}

\looseness=-1
Figure~\ref{fig:convergence-plots} shows median solution cost versus planning time across all nine robot--metric scenarios.
Under the weighted and kinetic energy metrics, planners with the matrix bound consistently converge faster than zero-heuristic and Euclidean counterparts, with the degree of improvement varying across planners and robot dimensionality.
The improvement is most pronounced under the kinetic energy metric, consistent with the 3--4 times tighter heuristic ratios reported in Section~\ref{sec:exp-heuristic-quality}.

\looseness=-1
Under the pullback metric, the inadmissible Euclidean heuristic traps planners in local minima on the UR5 and Franka tasks, failing to converge within the time budget.
The PR2 is an exception: despite large overestimation, the Euclidean heuristic converges well, as the overestimate coincidentally guides search toward better solutions on this problem instance.
Conversely, the matrix bound offers only modest improvement, as regularization term near singularities makes $\Glower$ nearly isotropic, collapsing its directional advantage as predicted in Section~\ref{sec:exp-heuristic-quality}.

The fastest convergence is achieved with GRRT* ($\epsilon = 0.9$) under the matrix bound, across all robots and metrics.
Even when the bound becomes nearly isotropic and the informed set offers limited benefit, direct sampling in the isotropic Euclidean space still enables GRRT* to exploit its greedy biasing, maintaining its advantage over all baselines.
AORRTC with the matrix bound also converges faster than its other variants, though the zero-heuristic variant remains competitive across all metrics; heuristic quality affects only the informed sampling step in AORRTC's cost-augmented search.

%% file: sections/conclusion.tex
\begin{algorithm}[!t]
\caption{Loewner lower bound ($q_{\mathrm{init}}$, $\tau$)}
\label{alg:opt-bound}
$\Matrix{L} \gets \mathrm{Cholesky}(\Matrix{G}(q_{\mathrm{init}}))$\;
\Repeat{$\lambda^* \geq 1 - \tau$}{
    $q^* \gets \ArgMin{q} \lmin(\Inv{\Matrix{L}}\, \Matrix{G}(q)\, \Matrix{L}^{-\T})$\;
    $\lambda^* \gets \lmin(\Inv{\Matrix{L}}\, \Matrix{G}(q^*)\, \Matrix{L}^{-\T})$\;
    \If{$\lambda^* < 1 - \tau$}{
        $\Matrix{L} \gets \mathrm{LoewnerMeet}(\Matrix{L},\, \Matrix{G}(q^*))$
        \tcp*{Alg.~\ref{alg:loewner-meet}}
    }
}
\Return{$\Glower = \Matrix{L}\Matrix{L}^\T$}
\end{algorithm}

\vspace{-2mm}
\section{Conclusion}
\label{sec:conclusion}

This work presented a \emph{matrix-valued} admissible heuristic for informed sampling in motion planning under configuration-dependent Riemannian metrics.
The key insight is that a constant Loewner lower bound on the metric tensor preserves directional structure, yielding tighter informed sets than the scalar eigenvalue bound while remaining provably admissible.
Its Cholesky factorization reduces the Riemannian informed set to a standard prolate hyperspheroid in isotropic Euclidean space, enabling direct sampling with existing techniques.
Across nine manipulation scenarios spanning three robots and three metrics, our heuristic consistently accelerates convergence across multiple asymptotically optimal planners, with the largest gains under anisotropic metrics where directional structure is most informative.
Future work will investigate tighter bounds for heavily regularized metrics, where the current approach degenerates toward the scalar bound.
We are also interested in extending the framework to non-Euclidean cost functions where an explicit metric tensor is unavailable.

%% file: sections/appendix.tex
\appendices

\vspace{-2mm}
\section{Optimization-Based Bound Computation}
\label{sec:appendix-opt-bound}

\looseness=-1
We initialize $\Glower$ with the metric evaluated at $q_{\mathrm{init}}$, set to the midpoint of the joint limits, and iteratively search for $q^*$ via multi-start gradient descent with Armijo backtracking (Algorithm~\ref{alg:opt-bound}).
At each iteration, $q^*$ is the configuration with the largest deviation between $\Matrix{G}(q)$ and the current bound; we then tighten $\Glower$ via Algorithm~\ref{alg:loewner-meet}.
The process terminates when $\lambda^* \geq 1 - \tau$ for all found optima, which certifies $\Glower \preceq \Matrix{G}(q)$ over the explored space.
We use $\tau = 10^{-6}$ in all experiments.

%% file: IEEEabrv.bib
@STRING{IEEE_J_RAL        = "{IEEE} Robot. Autom. Lett."}

@STRING{IEEE_J_RO         = "{IEEE} Trans. Robot."}

@STRING{IEEE_M_RA         = "{IEEE} Robot. Autom. Mag."}


%% file: references-short.bib
@article{lavalle2001randomized,
  title     = {Randomized kinodynamic planning},
  author    = {LaValle, Steven M and Kuffner Jr, James J},
  journal   = {Int. J. Robot. Res.},
  volume    = {20},
  number    = {5},
  pages     = {378--400},
  year      = {2001},
  publisher = {SAGE Publications}
}

@article{karaman2011sampling,
  title     = {Sampling-based algorithms for optimal motion planning},
  author    = {Karaman, Sertac and Frazzoli, Emilio},
  journal   = {Int. J. Robot. Res.},
  volume    = {30},
  number    = {7},
  pages     = {846--894},
  year      = {2011},
  publisher = {Sage Publications}
}

@article{gammell2020batch,
  title   = {{Batch Informed Trees (BIT*)}: Informed asymptotically optimal anytime search},
  author  = {Gammell, Jonathan D and Barfoot, Timothy D and Srinivasa, Siddhartha S},
  journal = {Int. J. Robot. Res.},
  volume  = {39},
  number  = {5},
  pages   = {543--567},
  year    = {2020}
}

@article{gammell2018informed,
  title   = {Informed sampling for asymptotically optimal path planning},
  author  = {Gammell, Jonathan D and Barfoot, Timothy D and Srinivasa, Siddhartha S},
  journal = IEEE_J_RO,
  volume  = {34},
  number  = {4},
  pages   = {966--984},
  year    = {2018}
}

@book{bullo2019geometric,
  title     = {Geometric control of mechanical systems: modeling, analysis, and design for simple mechanical control systems},
  author    = {Bullo, Francesco and Lewis, Andrew D},
  volume    = {49},
  year      = {2019},
  publisher = {Springer}
}

@inproceedings{jaquier2022riemannian,
  title        = {{Riemannian} geometry as a unifying theory for robot motion learning and control},
  author       = {Jaquier, No{\'e}mie and Asfour, Tamim},
  booktitle    = {Proc. Int. Symp. Robot. Res. {(ISRR)}},
  pages        = {395--403},
  year         = {2022},
  organization = {Springer}
}

@article{li2024riemannian,
  title     = {A {Riemannian} take on distance fields and geodesic flows in robotics},
  author    = {Li, Yiming and Qiu, Jiacheng and Calinon, Sylvain},
  journal   = {Int. J. Robot. Res.},
  pages     = {02783649261420233},
  year      = {2024},
  publisher = {SAGE Publications}
}

@article{kyaw2026geometry,
  title   = {Geometry-Aware Sampling-Based Motion Planning on {Riemannian} Manifolds},
  author  = {Kyaw, Phone Thiha and Kelly, Jonathan},
  journal = {arXiv preprint arXiv:2602.00992},
  year    = {2026}
}

@article{jaquier2021geometry,
  title     = {Geometry-aware manipulability learning, tracking, and transfer},
  author    = {Jaquier, No{\'e}mie and Rozo, Leonel and Caldwell, Darwin G and Calinon, Sylvain},
  journal   = {Int. J. Robot. Res.},
  volume    = {40},
  number    = {2-3},
  pages     = {624--650},
  year      = {2021},
  publisher = {SAGE Publications}
}

@article{saveriano2023learning,
  title     = {Learning stable robotic skills on {Riemannian} manifolds},
  author    = {Saveriano, Matteo and Abu-Dakka, Fares J and Kyrki, Ville},
  journal   = {Robot. Auton. Syst.},
  volume    = {169},
  pages     = {104510},
  year      = {2023},
  publisher = {Elsevier}
}

@article{maric2021riemannian,
  title     = {A {Riemannian} metric for geometry-aware singularity avoidance by articulated robots},
  author    = {Mari{\'c}, Filip and Petrovi{\'c}, Luka and Guberina, Marko and Kelly, Jonathan and Petrovi{\'c}, Ivan},
  journal   = {Robot. Auton. Syst.},
  volume    = {145},
  pages     = {103865},
  year      = {2021},
  publisher = {Elsevier}
}

@book{lee2018introduction,
  title     = {Introduction to {Riemannian} Manifolds},
  author    = {Lee, John M},
  volume    = {2},
  year      = {2018},
  publisher = {Springer}
}

@article{peyre2009geodesic,
  title     = {Geodesic methods for shape and surface processing},
  author    = {Peyr{\'e}, Gabriel and Cohen, Laurent D},
  journal   = {Adv. Comput. Vis. Med. Image Process.},
  pages     = {29--56},
  year      = {2009},
  publisher = {Springer}
}

@book{bhatia2007positive,
  title     = {Positive Definite Matrices},
  author    = {Bhatia, Rajendra},
  year      = {2007},
  publisher = {Princeton University Press}
}

@article{kadison1951order,
  title     = {Order properties of bounded self-adjoint operators},
  author    = {Kadison, Richard V},
  journal   = {Proc. Amer. Math. Soc.},
  volume    = {2},
  number    = {3},
  pages     = {505--510},
  year      = {1951},
  publisher = {JSTOR}
}

@inproceedings{singh2017robust,
  title     = {Robust online motion planning via contraction theory and convex optimization},
  author    = {Singh, Sumeet and Majumdar, Anirudha and Slotine, Jean-Jacques and Pavone, Marco},
  booktitle = {Proc. {IEEE} Int. Conf. Robot. Autom. {(ICRA)}},
  pages     = {5883--5890},
  year      = {2017}
}

@article{strub2022adaptively,
  title     = {{Adaptively Informed Trees (AIT*)} and {Effort Informed Trees ({EIT*})}: Asymmetric bidirectional sampling-based path planning},
  author    = {Strub, Marlin P and Gammell, Jonathan D},
  journal   = {Int. J. Robot. Res.},
  volume    = {41},
  number    = {4},
  pages     = {390--417},
  year      = {2022},
  publisher = {SAGE Publications}
}

@article{kyaw2024greedy,
  title   = {Greedy heuristics for sampling-based motion planning in high-dimensional state spaces},
  author  = {Kyaw, Phone Thiha and Le, Anh Vu and Mohan, Rajesh Elara and Kelly, Jonathan},
  journal = {arXiv preprint arXiv:2405.03411},
  year    = {2024}
}

@article{sehn2024off,
  title     = {Off the beaten track: Laterally weighted motion planning for local obstacle avoidance},
  author    = {Sehn, Jordy and Barfoot, Timothy D and Collier, Jack},
  journal   = {IEEE Trans. Field Robot.},
  volume    = {1},
  pages     = {249--275},
  year      = {2024}
}

@inproceedings{yi2018generalizing,
  title     = {Generalizing informed sampling for asymptotically-optimal sampling-based kinodynamic planning via markov chain monte carlo},
  author    = {Yi, Daqing and Thakker, Rohan and Gulino, Cole and Salzman, Oren and Srinivasa, Siddhartha},
  booktitle = {Proc. {IEEE} Int. Conf. Robot. Autom. {(ICRA)}},
  pages     = {7063--7070},
  year      = {2018}
}

@inproceedings{kunz2016hierarchical,
  title     = {Hierarchical rejection sampling for informed kinodynamic planning in high-dimensional spaces},
  author    = {Kunz, Tobias and Thomaz, Andrea and Christensen, Henrik},
  booktitle = {Proc. {IEEE} Int. Conf. Robot. Autom. {(ICRA)}},
  pages     = {89--96},
  year      = {2016}
}

@article{gammell2021asymptotically,
  title     = {Asymptotically optimal sampling-based motion planning methods},
  author    = {Gammell, Jonathan D and Strub, Marlin P},
  journal   = {Annu. Rev. Control Robot. Auton. Syst.},
  volume    = {4},
  number    = {1},
  pages     = {295--318},
  year      = {2021},
  publisher = {Annual Reviews}
}

@article{paden2017verification,
  title   = {Verification and synthesis of admissible heuristics for kinodynamic motion planning},
  author  = {Paden, Brian and Varricchio, Valerio and Frazzoli, Emilio},
  journal = IEEE_J_RAL,
  volume  = {2},
  number  = {2},
  pages   = {648--655},
  year    = {2017}
}

@article{przybylski2024asymptotically,
  title   = {Asymptotically optimal {A*} for kinodynamic planning},
  author  = {Przybylski, Maciej},
  journal = IEEE_J_RAL,
  volume  = {9},
  number  = {5},
  pages   = {4353--4360},
  year    = {2024}
}

@article{sakcak2019admissible,
  title   = {An admissible heuristic to improve convergence in kinodynamic planners using motion primitives},
  author  = {Sakcak, Basak and Bascetta, Luca and Ferretti, Gianni and Prandini, Maria},
  journal = {IEEE Control Syst. Lett.},
  volume  = {4},
  number  = {1},
  pages   = {175--180},
  year    = {2019}
}

@article{liu2018search,
  title   = {Search-based motion planning for aggressive flight in {SE(3)}},
  author  = {Liu, Sikang and Mohta, Kartik and Atanasov, Nikolay and Kumar, Vijay},
  journal = IEEE_J_RAL,
  volume  = {3},
  number  = {3},
  pages   = {2439--2446},
  year    = {2018}
}

@article{surazhsky2005fast,
  title     = {Fast exact and approximate geodesics on meshes},
  author    = {Surazhsky, Vitaly and Surazhsky, Tatiana and Kirsanov, Danil and Gortler, Steven J and Hoppe, Hugues},
  journal   = {ACM Trans. Graph.},
  volume    = {24},
  number    = {3},
  pages     = {553--560},
  year      = {2005},
  publisher = {Acm New York, NY, USA}
}

@inproceedings{peyre2006landmark,
  title     = {Landmark-based geodesic computation for heuristically driven path planning},
  author    = {Peyre, Gabriel and Cohen, Laurent D},
  booktitle = {Proc. {IEEE} Conf. Comput. Vis. Pattern Recognit. {(CVPR)}},
  volume    = {2},
  pages     = {2229--2236},
  year      = {2006}
}

@inproceedings{paden2017landmark,
  title     = {Landmark guided probabilistic roadmap queries},
  author    = {Paden, Brian and Nager, Yannik and Frazzoli, Emilio},
  booktitle = {Proc. {IEEE/RSJ} Int. Conf. Intell. Robots Syst. {(IROS)}},
  pages     = {4828--4834},
  year      = {2017}
}

@article{mirebeau2019riemannian,
  title     = {Riemannian fast-marching on cartesian grids, using {Voronoi's} first reduction of quadratic forms},
  author    = {Mirebeau, Jean-Marie},
  journal   = {SIAM J. Numer. Anal.},
  volume    = {57},
  number    = {6},
  pages     = {2608--2655},
  year      = {2019},
  publisher = {SIAM}
}

@article{cohn2025non,
  title     = {Non-{Euclidean} motion planning with graphs of geodesically convex sets},
  author    = {Cohn, Thomas and Petersen, Mark and Simchowitz, Max and Tedrake, Russ},
  journal   = {Int. J. Robot. Res.},
  volume    = {44},
  number    = {10-11},
  pages     = {1840--1862},
  year      = {2025},
  publisher = {Sage Publications}
}

@article{lukyanenko2023probabilistic,
  title     = {Probabilistic motion planning for non-{Euclidean} and multi-vehicle problems},
  author    = {Lukyanenko, Anton and Soudbakhsh, Damoon},
  journal   = {Robot. Auton. Syst.},
  volume    = {168},
  pages     = {104487},
  year      = {2023},
  publisher = {Elsevier}
}

@book{boyd1994linear,
  title     = {Linear matrix inequalities in system and control theory},
  author    = {Boyd, Stephen and El Ghaoui, Laurent and Feron, Eric and Balakrishnan, Venkataramanan},
  year      = {1994},
  publisher = {SIAM}
}

@article{bopardikar2016robust,
  title     = {Robust belief space planning under intermittent sensing via a maximum eigenvalue-based bound},
  author    = {Bopardikar, Shaunak D and Englot, Brendan and Speranzon, Alberto and van den Berg, Jur},
  journal   = {Int. J. Robot. Res.},
  volume    = {35},
  number    = {13},
  pages     = {1609--1626},
  year      = {2016},
  publisher = {SAGE Publications}
}

@inproceedings{shan2017belief,
  title     = {Belief roadmap search: Advances in optimal and efficient planning under uncertainty},
  author    = {Shan, Tixiao and Englot, Brendan},
  booktitle = {Proc. {IEEE/RSJ} Int. Conf. Intell. Robots Syst. {(IROS)}},
  pages     = {5318--5325},
  year      = {2017}
}

@inproceedings{deits2015computing,
  title        = {Computing large convex regions of obstacle-free space through semidefinite programming},
  author       = {Deits, Robin and Tedrake, Russ},
  booktitle    = {Proc. Workshop Algorithmic Found. Robot. {(WAFR)}},
  pages        = {109--124},
  year         = {2015},
  organization = {Springer}
}

@article{holmes2024semidefinite,
  title   = {On semidefinite relaxations for matrix-weighted state-estimation problems in robotics},
  author  = {Holmes, Connor and D{\"u}mbgen, Frederike and Barfoot, Timothy},
  journal = IEEE_J_RO,
  volume  = {40},
  pages   = {4805--4824},
  year    = {2024}
}

@inproceedings{kalakrishnan2011stomp,
  title     = {{STOMP}: Stochastic trajectory optimization for motion planning},
  author    = {Kalakrishnan, Mrinal and Chitta, Sachin and Theodorou, Evangelos and Pastor, Peter and Schaal, Stefan},
  booktitle = {Proc. {IEEE} Int. Conf. Robot. Autom. {(ICRA)}},
  pages     = {4569--4574},
  year      = {2011}
}

@article{zucker2013chomp,
  title     = {{CHOMP}: Covariant hamiltonian optimization for motion planning},
  author    = {Zucker, Matt and Ratliff, Nathan and Dragan, Anca D and Pivtoraiko, Mihail and Klingensmith, Matthew and Dellin, Christopher M and Bagnell, J Andrew and Srinivasa, Siddhartha S},
  journal   = {Int. J. Robot. Res.},
  volume    = {32},
  number    = {9-10},
  pages     = {1164--1193},
  year      = {2013},
  publisher = {SAGE Publications}
}

@article{chamzas2021motionbenchmaker,
  title   = {{MotionBenchMaker}: A tool to generate and benchmark motion planning datasets},
  author  = {Chamzas, Constantinos and Quintero-Pena, Carlos and Kingston, Zachary and Orthey, Andreas and Rakita, Daniel and Gleicher, Michael and Toussaint, Marc and Kavraki, Lydia E},
  journal = IEEE_J_RAL,
  volume  = {7},
  number  = {2},
  pages   = {882--889},
  year    = {2022}
}

@inproceedings{klemm2015rrt,
  title     = {{RRT*-Connect}: Faster, asymptotically optimal motion planning},
  author    = {Klemm, Sebastian and Oberl{\"a}nder, Jan and Hermann, Andreas and Roennau, Arne and Schamm, Thomas and Zollner, J Marius and Dillmann, R{\"u}diger},
  booktitle = {Proc. {IEEE} Int. Conf. Robot. Biomimetics {(ROBIO)}},
  pages     = {1670--1677},
  year      = {2015}
}

@inproceedings{thomason2024motions,
  title     = {Motions in microseconds via vectorized sampling-based planning},
  author    = {Thomason, Wil and Kingston, Zachary and Kavraki, Lydia E},
  booktitle = {Proc. {IEEE} Int. Conf. Robot. Autom. {(ICRA)}},
  pages     = {8749--8756},
  year      = {2024}
}

@inproceedings{carpentier2019pinocchio,
  title     = {The {Pinocchio C++} library: A fast and flexible implementation of rigid body dynamics algorithms and their analytical derivatives},
  author    = {Carpentier, Justin and Saurel, Guilhem and Buondonno, Gabriele and Mirabel, Joseph and Lamiraux, Florent and Stasse, Olivier and Mansard, Nicolas},
  booktitle = {Proc. {IEEE/SICE} Int. Symp. Syst. Integr. {(SII)}},
  pages     = {614--619},
  year      = {2019}
}

@article{sucan2012open,
  title   = {The open motion planning library},
  author  = {Sucan, Ioan A and Moll, Mark and Kavraki, Lydia E},
  journal = IEEE_M_RA,
  volume  = {19},
  number  = {4},
  pages   = {72--82},
  year    = {2012}
}

@article{wilson2025aorrtc,
  title   = {{AORRTC}: Almost-surely asymptotically optimal planning with {RRT-Connect}},
  author  = {Wilson, Tyler S and Thomason, Wil and Kingston, Zachary and Gammell, Jonathan D},
  journal = IEEE_J_RAL,
  year    = {2025}
}
